\def\eqref#1{equation~\ref{#1}}
\def\1{\bm{1}}
\DeclareMathAlphabet{\mathsfit}{\encodingdefault}{\sfdefault}{m}{sl}
\SetMathAlphabet{\mathsfit}{bold}{\encodingdefault}{\sfdefault}{bx}{n}
\newtheorem{theorem}{Theorem}
\newcommand{\orcid}[1]{#1}
\begin{document}
%
\title{No learning rates needed: Introducing SALSA \\ - Stable Armijo
Line Search Adaptation}


\author{\IEEEauthorblockN{1\textsuperscript{st} Philip Kenneweg}
\IEEEauthorblockA{\textit{AG Machine Learning} \\
\textit{Bielefeld University}\\
Bielefeld, Germany \\
 \orcid{0000-0002-7097-173X}}
 \and
\IEEEauthorblockN{2\textsuperscript{nd} Tristan Kenneweg}
\IEEEauthorblockA{\textit{AG Machine Learning} \\
\textit{Bielefeld University}\\
Bielefeld, Germany \\
 \orcid{0000-0001-8213-9396}}
\and
\IEEEauthorblockN{3\textsuperscript{rd} Fabian Fumagalli}
\IEEEauthorblockA{\textit{AG Machine Learning} \\
\textit{Bielefeld University}\\
Bielefeld, Germany \\
\orcid{0000-0003-3955-3510}
\and
\IEEEauthorblockN{4\textsuperscript{th} Barbara Hammer}
\IEEEauthorblockA{\textit{AG Machine Learning} \\
\textit{Bielefeld University}\\
Bielefeld, Germany \\
 \orcid{0000-0002-0935-5591}}}}


\maketitle

\begin{abstract}
In recent studies, line search methods have been demonstrated to significantly enhance the performance of conventional stochastic gradient descent techniques across various datasets and architectures, while making an otherwise critical choice of learning rate schedule superfluous. 
In this paper, we identify problems of current state-of-the-art of line search methods, 
propose enhancements, and rigorously assess their effectiveness. Furthermore, we evaluate these methods on orders of magnitude larger datasets and more complex data domains than previously done.

More specifically, we enhance the Armijo line search method by speeding up its computation and incorporating a momentum term into the Armijo criterion, making it better suited for stochastic mini-batching. Our optimization approach outperforms both the previous Armijo implementation and a tuned learning rate schedule for the Adam and SGD optimizers.
Our evaluation covers a diverse range of architectures, such as Transformers, CNNs, and MLPs, as well as data domains, including NLP and image data.

Our work is publicly available as a Python package, which provides a simple Pytorch optimizer.
\end{abstract}


%
\IEEEpeerreviewmaketitle

\section{Introduction}
\label{sec:intro}


In the field of modern machine learning, there are numerous optimization algorithms available \cite{schmidt2021descending}. However, determining the most suitable algorithm for a specific problem and finding the appropriate learning rate or learning rate schedule often requires extensive expertise and computational resources. In particular, the prevailing approach involves treating the learning rate as a hyperparameter and training the network repeatedly until the optimal value that yields the best performance is discovered. 
To simplify and expedite this process, recent research in deep learning \cite{vaswani20a, mahsereci15a, bollapragada18a, paquette20a} has proposed the reintroduction of line search methods as popular optimization technology, which effectively identify an appropriate learning rate by calculating the loss using different step sizes and comparing its reduction to the gradient of the loss function, hereby they eliminate costly hyperparameter tuning.

Traditional line search methods necessitate several forward passes per gradient update, making it imperative to explore more efficient alternatives. In the work by Vaswani et al. \cite{vaswani20a}, a Stochastic Line Search (SLS) is integrated with an intelligent re-initialization of the step size, effectively mitigating the requirement of multiple forward passes at each step.
This approach was shown in \cite{vaswani20a, ijcnn2023} to improve a variety of optimization methods, such as Stochastic Gradient Descent (SGD) on tasks such as matrix factorization as well as image classification for small networks and datasets. In \cite{vaswani2021adaptive} the authors adapt this line search to preconditioned optimizers like Adam \cite{adam} further increasing its usability. 

In this paper we extend upon this work, by introducing a momentum term to the SLS, critically improving its performance and stability. Furthermore, we introduce a limitation on the frequency with which a line search is performed, greatly reducing the computation needed.
Additionally, we conduct extensive experiments to evaluate the performance of various optimization methods across different datasets domains and architecture options.
Our findings demonstrate that, our improved Stable Armijo Line Search Adaptation algorithm, called SaLSa, consistently outperforms the previously introduced SLS as well as tuned optimizers, with very little computational overhead (about 3\% compared to no line search). We observe the SaLSa optimizers have on average an 1.5\% advantage on accuracy and a 50\% lower average log loss at end of training. Additionally, the stability of the training is improved compared to previously introduced versions of SLS. 

To make our work easy to reproduce and use, we implement all methods as PyTorch optimizers.
The source code is open-source and free  software (MIT licensed) and available on \href{https://github.com/TheMody/No-learning-rates-needed-Introducing-SALSA-Stable-Armijo-Line-Search-Adaptation}{github}. 

\section{Background}
\label{sec:Background}

The stochastic Armijo line search described in \cite{vaswani20a} and \cite{ijcnn2023} is designed to set a step size for all network parameters $w_k$ at iteration $k$.  In this section, we closely follow \cite{ijcnn2023} to formalize a modification of the Armijo criterion to handle the ADAM \cite{adam} direction. This is based originally upon \cite{vaswani20a, vaswani2021adaptive}
More importantly, we introduce an improved Armijo criterion, which mitigates the effect of noise in the mini-batch setting by calculating an exponential moving average smoothing on both sides of the Armijo equation.

We adopt the following notation from \cite{ijcnn2023}:
The loss function is denoted by $f(w)$. $|| \cdot ||$ denotes the Euclidean norm and $\nabla f$ denotes the gradient of $f$. Given the iteration counter $k$, $f_k$ and $\nabla f_k$ denote the mini-batch loss and its mini-batch gradient. 
\subsection{Armijo Line Search} 
\label{sec:sgdarmijo}
The Armijo line search criterion is defined as: 
\begin{equation}
    f_{k}(w_k + \eta_k d_k) \leq f_{k}(w_k) - c \cdot \eta_k ||\nabla f_{k}(w_k)||^2
    \label{eq:armijo}
\end{equation}
where $d_k$ is the direction (e.g., $d_k=-\nabla f_{k}(w_k)$ in case of SGD), 
$c \in (0,1)$ is a constant (commonly fixed to be $0.1$ \cite{vaswani20a}). The step size $\eta_k$ which satisfies Condition \ref{eq:armijo} is practically obtained by employing a backtracking procedure, see the pseudo-code in Algorithm \ref{alg:sls}:

\begin{algorithm}
\caption{Armijo Line Search}\label{alg:sls}
\begin{algorithmic}[1]
    \STATE $\eta_k = \eta_{k-1} \cdot 2 ^{1/b}$
    \WHILE {not $f_{k}(w_k + \eta_k d_k) \leq f_{k}(w_k) - c \cdot \eta_k ||\nabla f_{k}(w_k)||^2$}  
        \STATE $\eta_k = \eta_k \cdot \delta$
    \ENDWHILE 
    \STATE $w_{k+1} = w_k + d_k \cdot \eta_k$
\end{algorithmic}
\end{algorithm}
To avoid a monotonically decreasing step size, $\eta_k$ is increased each step as can be seen in line 1, ($b = 500$ and $\delta = 0.9$ in practice).

\subsection{Including preconditioned Optimizers (Adam)}

\begin{figure}
    \includegraphics[width = 0.48\textwidth]{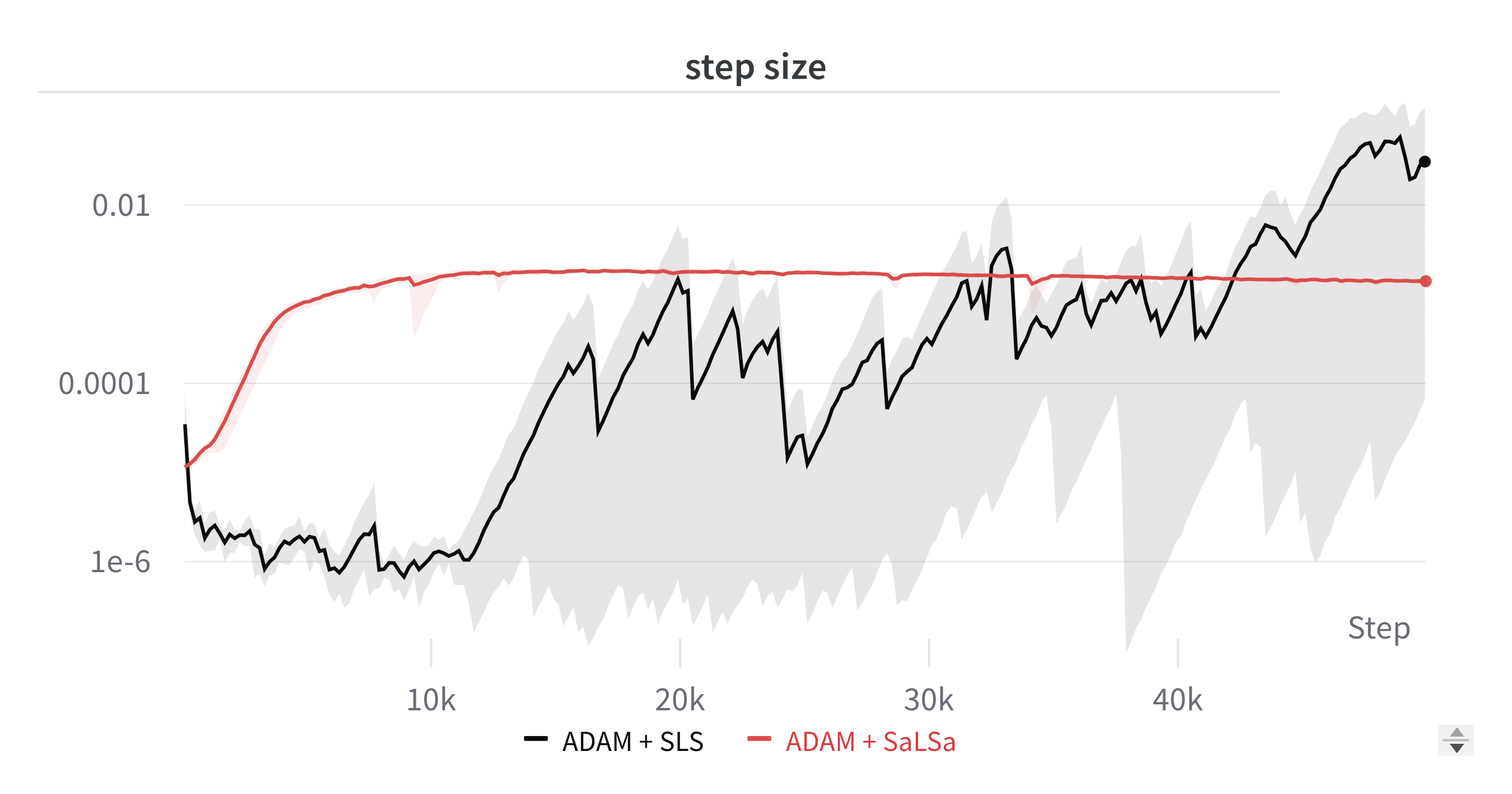}
  \caption{The step size of ADAM + SLS as well as ADAM + SaLSa on ImageNet. Colored areas indicate variance between runs. Notice the large variations for ADAM + SLS compared to the consistent and stable behavior of ADAM + SaLSa.}
\label{fig:ImageNet}
\end{figure}

\label{sec:adamarmijo}
In case of SGD, the direction $d_k$ is the negative mini-batch gradient.
\begin{equation*}
     d_k = -\nabla f_{k}(w_k)
\end{equation*}
Adam's direction defined in \cite{adam} can be written as:
\begin{equation}
\begin{split}
   g_{k} &= \nabla f_{ik}(w_k) \\
   m_{k} &= \beta_1 \cdot m_{k-1} + (1-\beta_1)\cdot g_{k} \\
   v_{k} &= \beta_2 \cdot v_{k-1} + (1-\beta_2)\cdot g_{k}^2 \\
   \hat m_{k} &=  m_{k-1} /(1-\beta^k_1) \\
   \hat v_{k} &=  v_{k-1} /(1-\beta^k_2) \\
    d_k &= -\hat m_{k} /(\sqrt{\hat v_{k} }+\epsilon)
\end{split}
\label{eq:adamopt}
\end{equation}

Adam integrates a momentum-based strategy along with a step-size correction based on gradient variance. In the context of training Transformers, these adjustments have proven to be significant improvements compared to the more straightforward SGD algorithm \cite{chen22a}.

The Armijo line search criterion from Eq. \ref{eq:armijo} must be adjusted for the Adam optimizer. We perform this adjustment following \cite{ijcnn2023} based on \cite{vaswani20a, vaswani2021adaptive}.
To check if the Armijo line search criterion is satisfied in the Adam case, we use the direction $d_k$ defined in Eq. \ref{eq:adamopt}, with momentum $\beta_1 = 0$. Note that, the Armijo criterion is only guaranteed to be satisfy-able by adjusting the step size $\eta_k$, if the update direction and the gradient direction are similar enough. However, this condition is not generally met when $\beta_1 \neq 0$ in Eq. \ref{eq:adamopt}. 
Additionally, we replace the gradient norm term $||\nabla f_{k}(w_k)||^2$ by the preconditioned gradient norm $\frac{||\nabla f_{k}(w_k)||^2}{\sqrt{\hat v_{k} }+\epsilon}$ as in \cite{vaswani2021adaptive} resulting in Eq. \ref{eq:armijoadam}.
\begin{equation}
    f_{k}(w_k + \eta_k d_k) \leq f_{k}(w_k) - c \cdot \eta_k \frac{||\nabla f_{k}(w_k)||^2}{\sqrt{\hat v_{k} }+\epsilon}
    \label{eq:armijoadam}
\end{equation}

Note that to perform final weight updates each step we use $\beta_1 \neq 0$.

\subsection{SLS Failure Cases}
\begin{figure}
    \includegraphics[width = 0.43\textwidth]{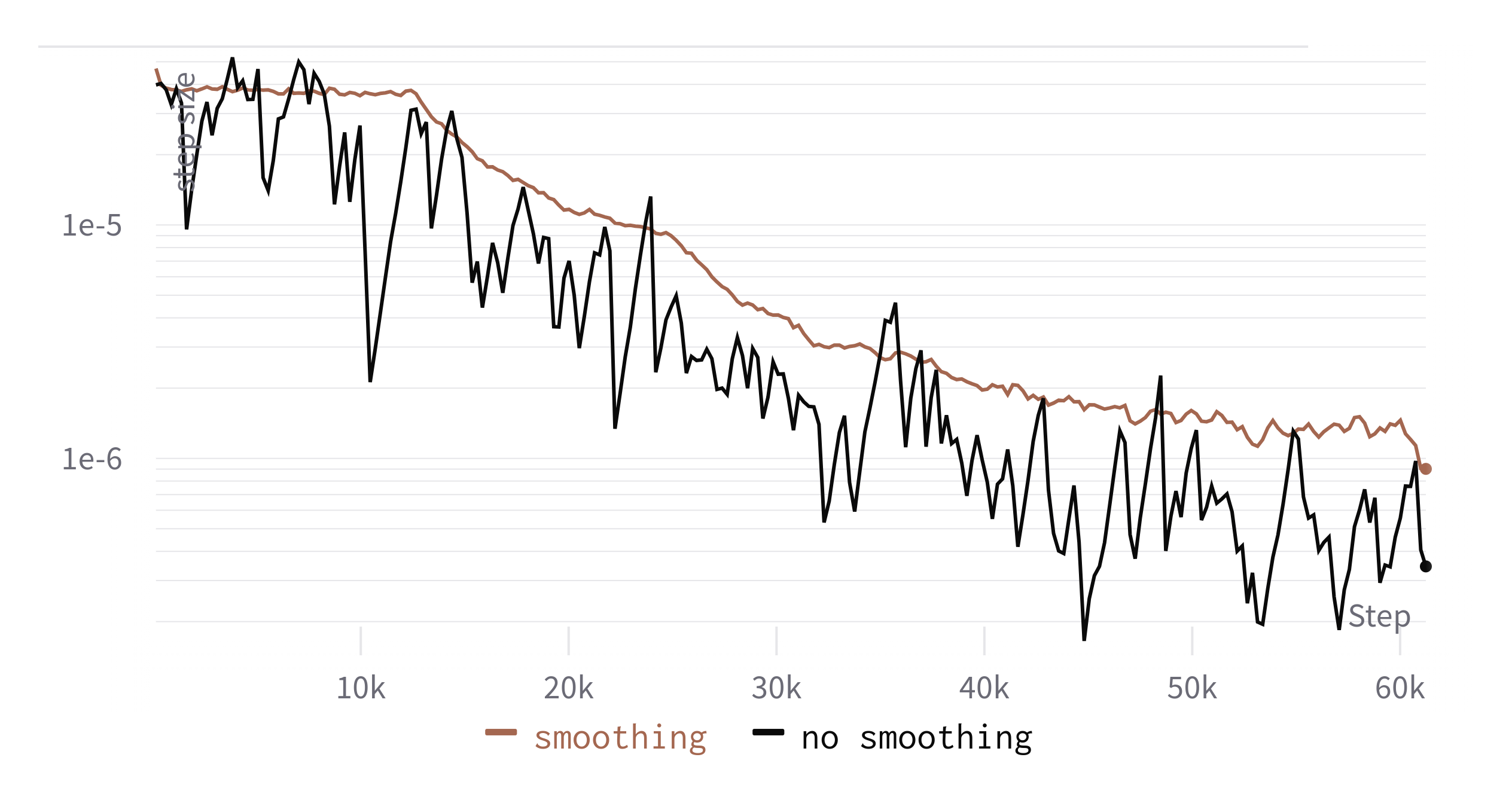}
  \caption{The step size of ADAM + SLS (black) compared to ADAM + SaLSa (brown) visualized during a training run of BERT on the MNLI dataset.}
\label{fig:smoothed}
\end{figure}
As shown in \cite{vaswani20a, vaswani2021adaptive} the previously described line search methods perform well on smaller datasets and neural network architectures. However, here we show that these methods have problems to consistently perform during larger scale training.


The first of these problems we call "mini-batch noise":
Eq. \ref{eq:armijo} and \ref{eq:armijoadam} describe criterions which are checked for every mini-batch.
This is problematic, since the criteria will be violated subject to inherent noise in the mini-batch data. 
The phenomenon is amplified by small mini-batch sizes. 
As can be seen in Figure \ref{fig:smoothed} in a typical training run the Armijo line search method leads to frequent changes of the step size, see Figure \ref{fig:smoothed}. 

\begin{figure}
    \includegraphics[width = 0.48\textwidth]{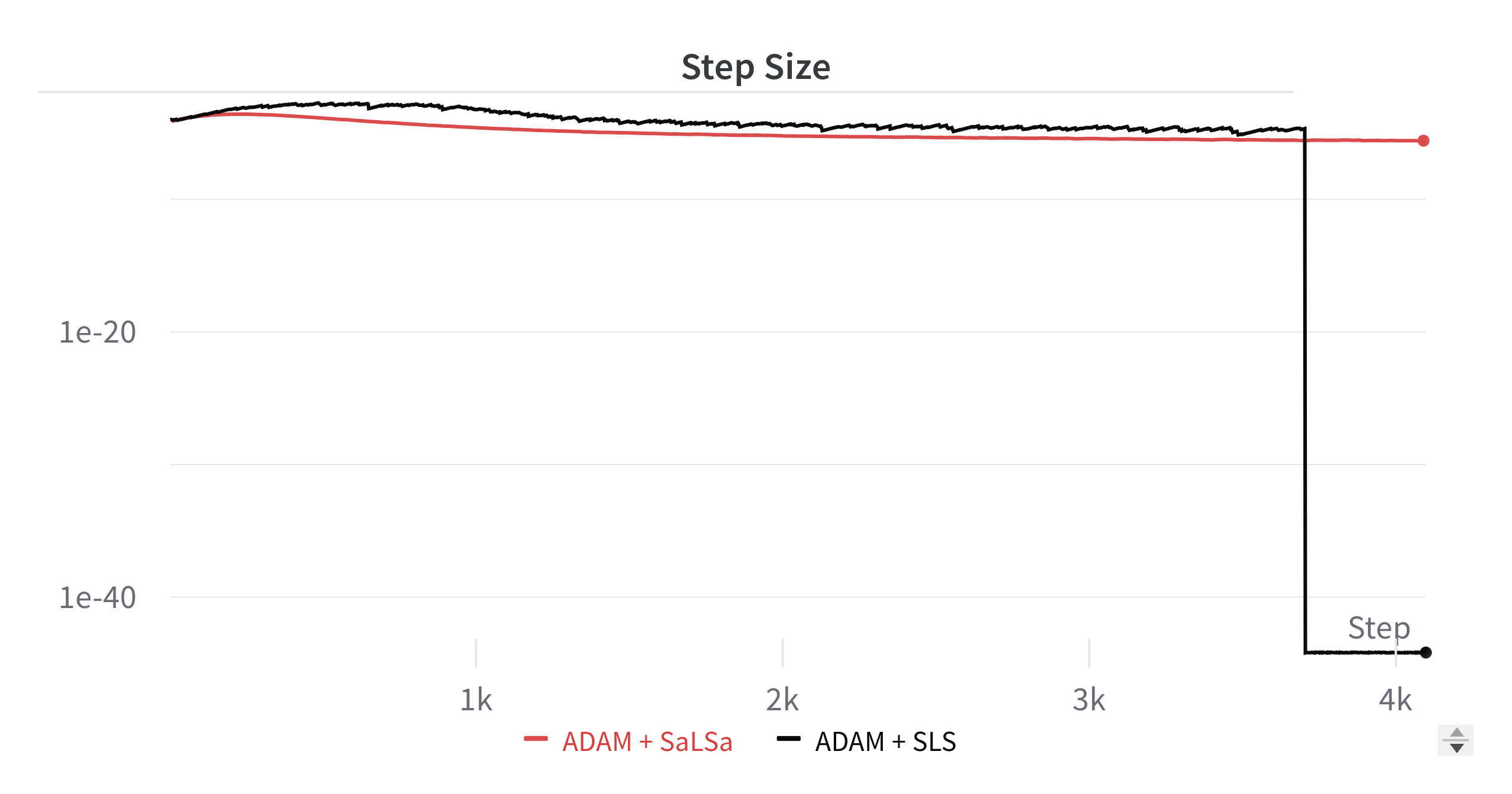}
  \caption{The step size of ADAM + SLS (black) on the CIFAR10 dataset. We sometimes observed drastic drops in step size due to computational precision problems. When training with SaLSa (red) we do not observe any such drops.}
\label{fig:gradprob}
\end{figure}

Another frequently occurring problem is mini-batch gradient approximations $\nabla f_{k}(w_k) \approx 0$. These are due to computational precision problems, even with float32 precision enabled. For an example see Figure \ref{fig:gradprob}. In the original implementation by \cite{vaswani2021adaptive}, whenever $\nabla f_{k}(w_k) \leq 10^{-8}$ no line search was performed.

Additionally, a problem which only occurs on large datasets, can be seen in Figure \ref{fig:ImageNet}. The step size and its variance over 5 different runs on ImageNet is visualized. We observe that the step sizes of SLS are very sensitive to initial conditions, the only difference between the runs is the random parameter initialization of the network and the shuffled dataset. This problem only seems to occur on larger datasets and is quite impactful, as some runs do not converge properly, or take very long to even begin converging.

\section{Methods}
\label{sec:methods}

To obtain a line search method with better properties in the mini-batch scenario, we propose to extend the Armijo criterion with a momentum term. Below we provide a detailed explanation of the modifications we made, the theoretical basis behind them, and our reasoning. Furthermore, we introduce a method to greatly reduce the computational overhead introduced by a line search.

\label{sec:mini-batch}
\subsection{Addressing Mini-batch Noise}


As an extension to Eq. \ref{eq:armijo} we propose an exponential smoothing (also called momentum term) of all factors which are dependent on the mini batch in this equation.
First we rewrite Eq. \ref{eq:armijo}:

\begin{equation}
    f_{k}(w_k) - f_{k}(w_k + \eta_k d_k) \geq  c \cdot \eta_k ||\nabla f_{k}(w_k)||^2
    \label{eq:armijonew}
\end{equation}

$f_{k}(w_k) - f_{k}(w_k + \eta_k d_k)$ denotes the decrease in loss and $||\nabla f_{k}(w_k)||^2$ denotes the gradient norm.
In order to apply exponential smoothing to both terms we define $h_{k}$ and $s_{k}$ as follows:
\begin{equation}
\begin{split}
    h_{k} = h_{k-1}\cdot\beta_3 + (f_{k}(w_k) - f_{k}(w_k + \eta_k d_k))\cdot(1-\beta_3) \\
    s_{k} = s_{k-1}\cdot\beta_3 + ||\nabla f_{k}(w_k)||^2\cdot(1-\beta_3)
\end{split}
\label{eq:smoothing}
\end{equation}
$h_{k}$ represents the smoothed decrease of the loss with the current step size, $s_{k}$ the smoothed gradient norm and $\beta_3 \in (0,1)$ the smoothing factor used for the exponential moving average.

We introduce the Stable Armijo Line Search Adaptation (SaLSa) criterion  as:

\begin{equation}
    h_{k} \geq  c \cdot \eta_k \cdot s_{k}
    \label{eq:armijosmoothed}
\end{equation}

Combining SaLSa and the Adam optimizer is done by computing $s_{k}$ as follows:

\begin{equation}
\begin{split}
    s_{k} = s_{k-1}\cdot\beta_3 + \frac{||\nabla f_{k}(w_k)||^2}{\sqrt{\hat v_{k}} +\epsilon} \cdot(1-\beta_3)
\end{split}
\label{eq:smoothingadam}
\end{equation}

and computing $d_k$ as described in Eq. \ref{eq:adamopt}, but with $\beta_1 = 0$. We keep the calculation procedure for the step size $\eta_k$ the same as previously described.

\subsection{Intuitive Motivation}
As mentioned in Section \ref{sec:mini-batch} due to the inherent noise in mini-batches we expect some of them to violate the original Armijo line search, even if the step size $\eta_k$ is appropriate for the majority of mini-batches around step $k$ in training.

Let us assume that all mini-batches are normally distributed with respect to the Armijo criterion, e.g. some mini-batches fulfill the condition with a wide margin, most fulfill it with a small margin and some rare exceptional batches violate the criterion. In this scenario we do not want to decrease the step size by a large amount each time we get an exceptionally bad mini-batch, since the step size is still fitting for most batches and the step size is only increased slowly. The stable Armijo line search adaptation in Equation \ref{eq:armijosmoothed} is implementing exactly this behaviour. Note that the exact distribution is not relevant in this thought experiment.

If we analyze Equation \ref{eq:armijosmoothed}, we notice that the right hand side is affected by the current step size $\eta_k$ to the same degree as in the original Armijo line search Equation \ref{eq:armijo}. However, the left hand side is substantially less affected since $\eta_k$ is part of the exponential smoothing process. This results in a slower reduction of the step size $\eta_k$ as the criterion is fulfilled more easily than the original criterion by lowering $\eta_k$.






\subsection{Theoretical Analysis}

We extend the convergence Theorem introduced in the original Armijo paper \cite{armijo66a} for the full batch setting and the SaLSa criterion with SGD from Eq. \ref{eq:armijosmoothed}.
We additionally require that every found learning rate yields an improved loss $f(w_k) - f(w_k + \eta_k d_k) \geq 0$.
This condition ensures the convergence for an infinite sequence, which may otherwise not be guaranteed due to the exponential smoothing.
In practice, enforcing this condition did not yield a significant difference in the optimization process and is thus not implemented in our experiments.

\begin{theorem}[Convergence Theorem]
\label{theorem1}
    Let $f \equiv f_k$.
    For $w_0 \in \mathbb{R}^d$ let $S(w_0) := \{w \mid f(w) \leq f(w_0)\}$ and assume that $f(w^*) := \inf_{w \in \mathbb{R}^d} f(w)$ exists for a unique point $w^* \in \mathbb{R}^d$ with $\nabla f(w) = 0$ for $w \in S(w_0)$ if and only if $w = w^*$.
    Any sequence $\{w_{k}\}_{k=1}^\infty$ found by the SaLSa criterion with $f(w_k) - f(w_k + \eta_k d_k) \geq 0$ and $c < 1$ converges to $w^*$.
\end{theorem}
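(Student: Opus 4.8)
The plan is to mimic the classical Armijo convergence argument, but to carry the exponential-moving-average quantities $h_k$ and $s_k$ along as bookkeeping devices and extract from them a per-step decrease that is summable. First I would observe that the standing assumption $f(w_k)-f(w_k+\eta_k d_k)\ge 0$ makes the sequence $\{f(w_k)\}$ monotonically nonincreasing, and since it is bounded below by $f(w^*)$ it converges; hence the telescoped sum $\sum_{k}\bigl(f(w_k)-f(w_{k+1})\bigr)$ is finite, so $f(w_k)-f(w_k+\eta_k d_k)\to 0$. This already traps the whole trajectory in the sublevel set $S(w_0)$, which is where the hypotheses on $f$ live.

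Next I would connect the true per-step decrease to the line-search quantities. Each accepted step satisfies the SaLSa criterion $h_k \ge c\,\eta_k s_k$ with $h_k,s_k$ the $\beta_3$-averages defined in Eq.~\eqref{eq:smoothing}. Since the instantaneous decrease $f(w_k)-f(w_k+\eta_k d_k)$ equals the "new data" fed into the recursion for $h_k$, and since an EMA with factor $\beta_3\in(0,1)$ of a nonnegative null sequence is itself a nonnegative null sequence, I get $h_k\to 0$; the nonnegativity assumption is exactly what prevents $h_k$ from staying positive via stale contributions, which is the subtlety the authors flag. From $h_k\ge c\,\eta_k s_k\ge 0$ and $c<1$ it then follows that $\eta_k s_k\to 0$. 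Unwinding $s_k$ (also an EMA, of $\|\nabla f_k(w_k)\|^2$ in the SGD case) gives control on $\eta_k\|\nabla f(w_k)\|^2$ along the sequence.

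The final step is the standard dichotomy argument on a convergent subsequence $w_{k_j}\to \bar w\in S(w_0)$ (such a subsequence exists once one argues $S(w_0)$ is bounded, e.g.\ from coercivity implicit in the uniqueness hypothesis). Either the step sizes $\eta_{k_j}$ stay bounded away from $0$, in which case $\eta_{k_j}s_{k_j}\to 0$ forces $\|\nabla f(\bar w)\|=0$; or $\eta_{k_j}\to 0$, in which case the backtracking mechanism (line~3 of Algorithm~\ref{alg:sls}, the factor $\delta$) means the step $\eta_{k_j}/\delta$ was \emph{rejected}, i.e.\ violated the criterion, and a first-order Taylor expansion of $f$ at $w_{k_j}$ in the direction $d_{k_j}=-\nabla f(w_{k_j})$ combined with that violation yields $\|\nabla f(w_{k_j})\|\to 0$, hence again $\nabla f(\bar w)=0$. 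By the hypothesis that $\nabla f$ vanishes on $S(w_0)$ only at $w^*$, we get $\bar w=w^*$; since every convergent subsequence has the same limit $w^*$ and the trajectory lives in a bounded set, $w_k\to w^*$.

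The main obstacle I expect is making the second step fully rigorous: in the stochastic/EMA setting one must be careful that the rejected-step inequality for $\eta_{k_j}/\delta$ can still be Taylor-expanded cleanly, because the criterion that was violated is the \emph{smoothed} one, not the raw Armijo inequality, and the smoothed left-hand side mixes in history from earlier iterates. In the full-batch setting assumed here ($f\equiv f_k$) this is manageable — one can either argue directly on the raw Armijo inequality that the backtracking loop actually tests, or bound the smoothed quantity below by a fresh term plus a nonnegative remainder — but it is the place where the argument needs the most care, and it is why the extra hypothesis $f(w_k)-f(w_k+\eta_k d_k)\ge 0$ is doing real work rather than being cosmetic.
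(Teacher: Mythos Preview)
Your approach and the paper's diverge substantially. The paper's proof devotes almost all of its effort to \emph{feasibility}: it shows that at every step a learning rate $\eta_k\le\eta_{k-1}$ satisfying the SaLSa criterion actually exists, by first proving (via Taylor expansion and a contradiction argument using $c<1$) that the ordinary Armijo condition $f(w_k)-f(w_k+\eta_k d_k)\ge c\,\eta_k\|\nabla f(w_k)\|^2$ can always be met, and then checking that this raw Armijo inequality together with the previous acceptance $h_{k-1}\ge c\,\eta_{k-1}s_{k-1}$ propagates through the EMA recursions to give $h_k\ge c\,\eta_k s_k$. The convergence claim itself is dispatched in one sentence: monotonicity from $f(w_k)-f(w_{k+1})\ge 0$ plus ``the assumptions'' is asserted to yield convergence to $f(w^*)$, and the passage from this to $w_k\to w^*$ is left implicit.

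You take feasibility for granted and invest everything in the convergence mechanism: extracting $h_k\to 0$, hence $\eta_k s_k\to 0$, and then running a subsequence dichotomy on the step sizes. This is closer in spirit to a full argument for the stated conclusion, but you never touch the one thing the paper actually proves --- that the backtracking loop terminates and the SaLSa criterion is satisfiable at each step --- so the sequence whose convergence you analyse is not yet known to exist. In that sense your proposal is complementary to the paper's proof rather than a reproduction of it; splicing the paper's feasibility argument in front of your convergence analysis would give something more complete than either alone. Two further cautions on your side: your dichotomy relies on boundedness of $S(w_0)$, which is not among the stated hypotheses and does not obviously follow from them; and inferring $\|\nabla f(w_{k_j})\|\to 0$ from $s_{k_j}\to 0$ along a \emph{subsequence} needs care, since the recursion $s_k=\beta_3 s_{k-1}+(1-\beta_3)\|\nabla f(w_k)\|^2$ links consecutive indices, not subsequential ones.
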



\begin{proof}
    The condition $f(w_k) - f(w_k + \eta_k d_k) \geq 0$ ensures that $\{f(w_k)\}_{k=1}^K$ is non-increasing and thus any infinite sequence will converge to $f(w^*)$, given the assumptions.
    It thus remains to show, that in every step such a step size $\eta_k$ can be found.
    By definition, we have that
    \begin{align*}
        h_k &= \beta_3 h_{k-1} + (1-\beta_3) \left[f(w_k) - f(w_k + \eta_k d_k)\right]
        \\
        &\geq \beta_3 c \eta_{k-1} s_{k-1} + (1-\beta_3) \left[f(w_k) - f(w_k + \eta_k d_k)\right].
    \end{align*}
        If we assume, that there exists a learning rate $\eta_k\leq\eta_{k-1}$, such that $f(w_k) - f(w_k + \eta_k d_k) \geq c \eta_k \Vert \nabla f(w_k)\Vert^2$ (see proof of existence below), we can show that
    \begin{align*}
        \beta_3 c \eta_{k-1} s_{k-1} + (1-\beta_3) \left[f(w_k) - f(w_k + \eta_k d_k)\right] &\geq \\ c \eta_{k}\left[\beta_3 s_{k-1} + (1-\beta_3) \Vert \nabla f(w_k)\Vert^2\right]
        \\
= c \eta_{k} s_{k},
    \end{align*}
    which finishes the proof, as we have found a learning rate $\eta_k$ that fulfills the SaLSa criterion.
    
    We now prove the existence of $\eta_k\leq\eta_{k-1}$ with $f(w_k) - f(w_k + \eta_k d_k) \geq c \eta_k \Vert \nabla f(w_k)\Vert^2$ by contradiction, i.e. we assume that such a $\eta_k$ does not exist and thus $f(w_k) - f(w_k + \eta_k d_k) < c \eta_k \Vert \nabla f(w_k)\Vert^2$ for $\eta_k \leq \eta_{k-1}$. 
    Using the Taylor expansion for $f$ around $f(w_k)$ yields 
    \begin{align*}
         f(w_k) - f(w_k + \eta_k d_k) = -\eta_k d_k \nabla f (w_k) - o(\eta_k).
    \end{align*}
    For $\eta_k \leq \eta_{k-1}$ it follows then
    \begin{align*}
        c \eta_k \Vert \nabla f(w_k)\Vert^2 > -\eta_k d_k \nabla f (w_k) - o(\eta_k).
    \end{align*}
    Dividing both sides by $\eta_k$ and taking the limit for $\eta_k \to 0$ yields
    \begin{align*}
         c \Vert \nabla f(w_k)\Vert^2 > -d_k \nabla f (w_k),
    \end{align*}
    and thus with $d_k = - \nabla f(w_k)$ it follows $c > 1$, which is a contradiction.
\end{proof}

\subsection{Addressing Computational Costs}
\label{sec:speedup}
It is unnecessary and computationally expensive to perform a line search for every step during training, as for most steps the step size does not need to be changed. The overall training compute cost increases by roughly 30\% when performing a line search every step. To address this, we propose to perform a line search more regularly when a high rate of change of the step size is detected and less regularly otherwise. We realize this by keeping 2 different exponential moving averages of the step size $\eta_k$ which we update after every line search procedure:
\begin{equation}
     \Bar{\eta}_{k}(\beta) = \beta \Bar{\eta}_{k-1} + (1-\beta) \cdot \eta_{k-1}
\end{equation}
We calculate the average rate of change as follows:
\begin{equation}
     r_{k} = \frac{\Bar{\eta}_{k}(0.9)}{\Bar{\eta}_{k}(0.99)}
\end{equation}
and invert it if $r_{k} \leq 1$:
\begin{equation}
     \Bar{r}_{k} = \begin{cases} 
r_{k} & \text{if }r_{k} \geq 1 \\
r_{k}^{-1}  & \text{otherwise}
\end{cases}
\end{equation}

we set the line search frequency $L_{k}$ to the closest integer of:

\begin{equation}
     L_{k} = \frac{1}{\Bar{r}_{k}  -1}
\end{equation}

and clamp it to the range $L_{k+1} \in [1,10]$.
We perform the line search every $L_{k+1}$ steps. 
This reduces the extra compute needed from roughly 30\% to approximately 3\% for longer runs, see Figure \ref{fig:wallclock}. In practice, we did not notice any performance degradation.

\begin{figure}
    \includegraphics[width = 0.48\textwidth]{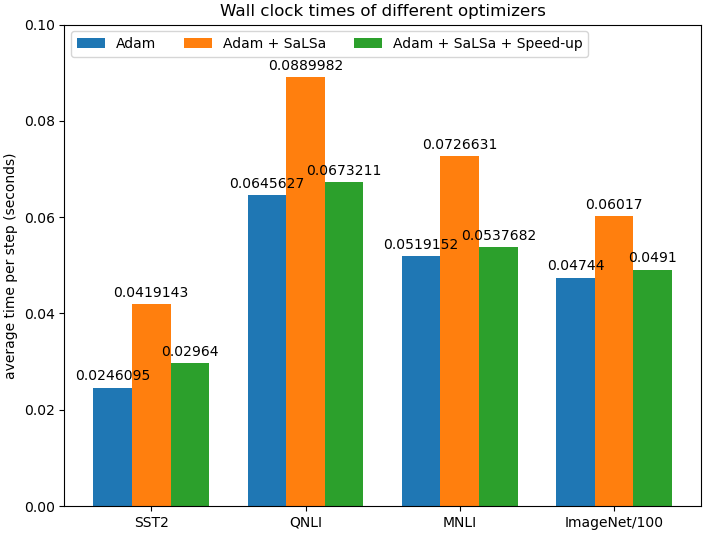}
  \caption{Wall clock run times on various datasets using the Speed-up described in \ref{sec:speedup}}
\label{fig:wallclock}
\end{figure}

\subsection{Practical Considerations}

In the original Armijo line search implementation a few outliers dominated the determination of $\eta_k$ as shown in Figure \ref{fig:smoothed}. The hyperparameter $c$ was set with this in mind. 
In our experiments we found good values for $c$ to be in the range $c \in [0.3,0.5]$.  
For all our experiments we used $c= 0.3$ (compared to $c=0.1$ for the original Armijo line search criterion). 

Furthermore, we tuned the hyperparameter $\beta_3 \in [0.9,0.999]$ from Eq. \ref{eq:smoothing} on a variety of datasets. We found that although performance is robust to the choice of $\beta_3$, a value of $\beta_3=0.99$ is the best general choice. Larger $\beta_3$'s result in slower adaptation of the step size to the loss landscape, but less susceptibility to noise. 

We performed Ablation studies on the impact of different $\beta_3$ and $c$ values. While we found $\beta_3$ to have a minor impact on performance different $c$ values do tend to have a more substantial impact. But in contrast to the learning rate, the $c$ hyperparameter seems to have an architecture and dataset independent optimum of $c=0.3$.

\section{Experimental Approach}
\label{sec:experiments}
In this section, we detail our experimental design to investigate the performance of our proposed optimization method.
We utilize datasets, model implementations and weights from the Huggingface library \cite{https://doi.org/10.48550/arxiv.1910.03771}, the pytorch datasets library and the nanoGPT \cite{nanoGPT} github repository. 

\subsection{Candidates} 
\label{sec:candidates}
A quick overview of all candidates we are evaluating can be seen below:

\begin{itemize}
    \item SGD  with tuned learning rate and learning rate schedule
    \item ADAM with tuned learning rate and learning rate schedule
    \item SGD + SLS, see Section \ref{sec:sgdarmijo} 
    \item ADAM + SLS, see Section \ref{sec:adamarmijo} 
    \item SGD + SaLSa, see Section \ref{sec:mini-batch}
    \item ADAM + SaLSa, see Section \ref{sec:mini-batch}
    \end{itemize}
    
As a baseline comparison we evaluate the ADAM and SGD optimizers with a cosine decay with warm starting for 10\% of the total training time. For NLP tasks this warm starting and cosine decay is common practice. For the image tasks we compare to a flat learning rate as done in \cite{vaswani20a}. 

We take the peak learning rate for ADAM on natural language tasks $\eta = 2 \cdot 10^{-5}$ from the original Bert paper by \cite{bert}, which presents a good value for a variety of classification tasks, including the Glue \cite{wang-etal-2018-glue} tasks upon which we are evaluating.
We found the value for the peak learning rate for SGD on the NLP task $\eta = 2 \cdot 10^{-3}$ using a grid search.

We found the value $\eta = 1 \cdot 10^{-3}$ for image classification for ADAM using a grid search. The same procedure resulted in $\eta = 1 \cdot 10^{-1}$ for SGD.

\subsection{Datasets and Models}
To evaluate an optimization method it is necessary to perform large scale runs of complex real world datasets and tasks. This is especially important as many optimization methods perform well on small scale or clean theoretical tasks, but fail to perform well on real world data.

\emph{Natural Language Processing - Transformers}

We consider a common scenario in natural language processing, where a large pre-trained language model (in our case Bert \cite{bert}) is fine-tuned on a small to medium sized dataset, similar to previous papers \cite{ijcnn2023}.
The Glue dataset, introduced by Wang et al. (2018) \cite{wang-etal-2018-glue}, comprises a diverse set of widely recognized natural language processing (NLP) classification tasks. This dataset serves as a benchmark for evaluating common NLP capabilities. The versions of the datasets utilized in this study are sourced from tensorflow-datasets 4.0.1.

To be specific, of the Glue collection \cite{wang-etal-2018-glue}, we use the datasets Stanford Sentiment Treebank \emph{SST2},
 Microsoft Research Paraphrase Corpus \emph{MRPC}, Stanford Question Answering Dataset \emph{QNLI},
 and the Multi-Genre Natural Language Inference Corpus 
  \emph{MNLI}. These datasets range from 500 - 400.000 training samples and represent a variety of fine-tuning tasks. 

  As a further evaluation metric for language models, we fine-tune GPT-2 \cite{gpt} on the Shakespeare dataset as described in \cite{nanoGPT}  implementation.

\emph{Image Classification - Convolutional Neural Networks}

In image classification common evaluation datasets are CIFAR10 and CIFAR100 \cite{cifar}, both being small scale (50.000 samples, 32x32 resolution). To obtain more reliable results we also compare on ImageNet \cite{imagenet} which consists of roughly $10^6$ samples.
We use the ResNet34 \cite{resnet} architecture without pre-training for small datasets and ResNet50 for ImageNet. 


\subsection{Implementation Details} 
The following details are the same for all experiments:
All models are trained 5 times and the averaged metrics are reported in Tables \ref{Fig:acc} and \ref{Fig:loss}. The learning curves as well as standard errors are visualized in Figures \ref{fig:nlp} and \ref{fig:image}. 

A Bert \cite{bert} model was trained on the NLP dataset with the following hyperparameter choices: Five epochs training time on each dataset. The pooling operation used in the Glue experiments is [CLS]. The maximum sequence length is set to 256 tokens. The batch size used during training is $32$.

For the image datasets CIFAR10 and CIFAR100 \cite{cifar} ResNet34 \cite{resnet} was used. For the ImageNet \cite{imagenet} dataset the ResNet50 \cite{resnet} architecture was used, a larger architecture was used due to the increased amount of complexity and size of the dataset. The batch size used during training is set to 128 for Cifar10 and Cifar100 and 256 for ImageNet. We applied pre-processing as described in the ResNet paper by \cite{resnet}. Models were trained on CIFAR10 and CIFAR100 for 100 epochs and on ImageNet for 12 epochs.


The computing time for all experiments was roughly 65 days on an A40 GPU. Roughly 15 of these days were used for the NLP tasks and 50 for the image datasets.

\section{Experimental Results}
\label{sec:results}

In this section, we will describe the results of our experiments. 
We compare the 6 candidates as described in Section \ref{sec:candidates}. All metrics reported are average values obtained using 5 training runs.
All accuracies presented below are computed based on the validation sets. The losses presented are computed on the training sets and are smoothed using an exponential moving average. The shaded regions surrounding each line denote the standard error for each experiment. Figures \ref{fig:nlp} and \ref{fig:image} showcase the accuracies and losses throughout the training period. Tables \ref{Fig:acc} and \ref{Fig:loss} provide a representation of the peak accuracies and final losses for each candidate. 
In summary, we observe the SaLSa methods having on average an 1.5\% advantage on accuracy and a 50\% lower average log loss at end of training. 

\subsection{Natural Language Processing - Transformer Experiments}


\begin{figure*}[h]
\subfloat[MNLI ]{\includegraphics[width = 0.33\textwidth]{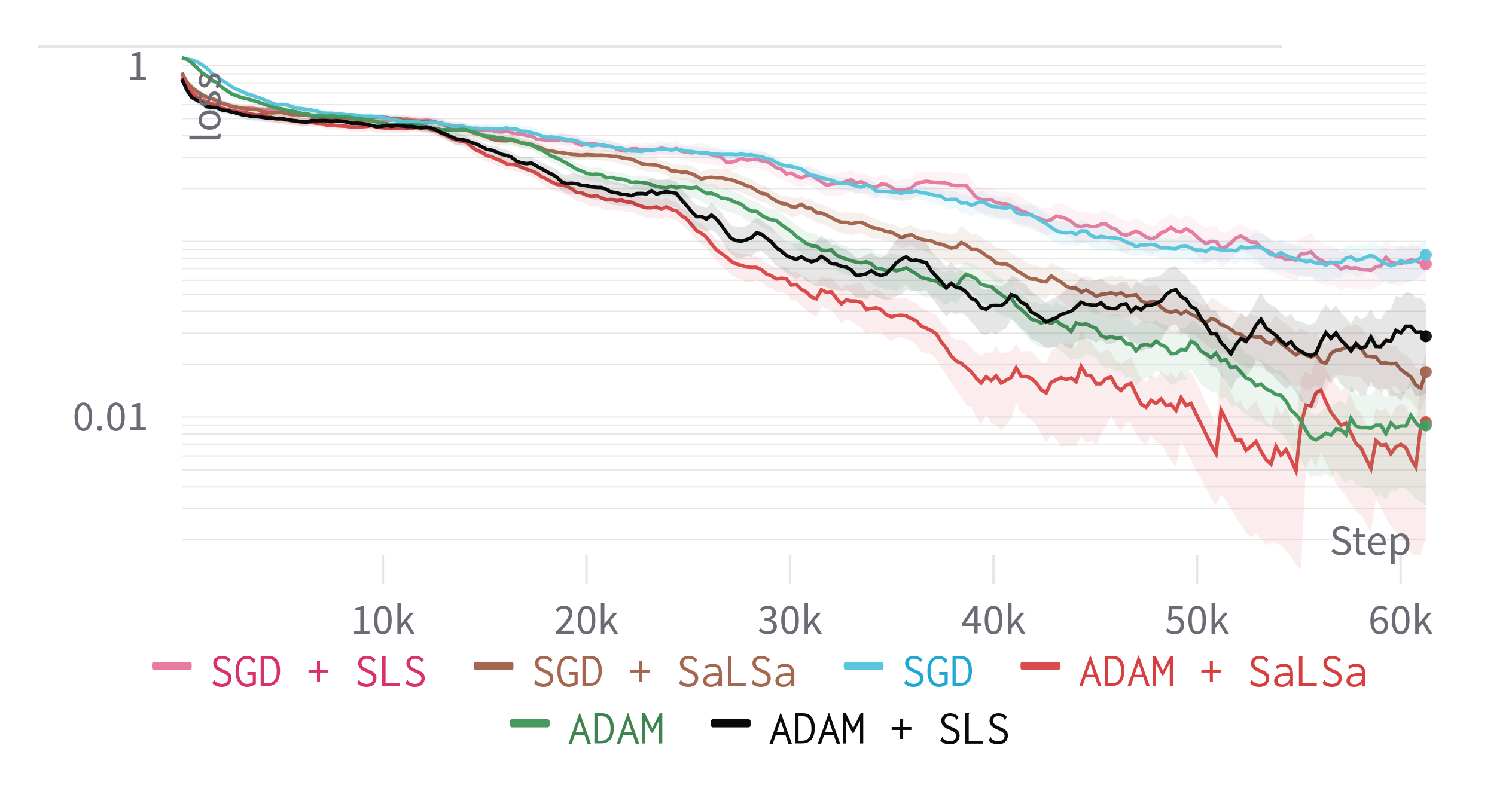}} 
\subfloat[MRPC ]{\includegraphics[width = 0.33\textwidth]{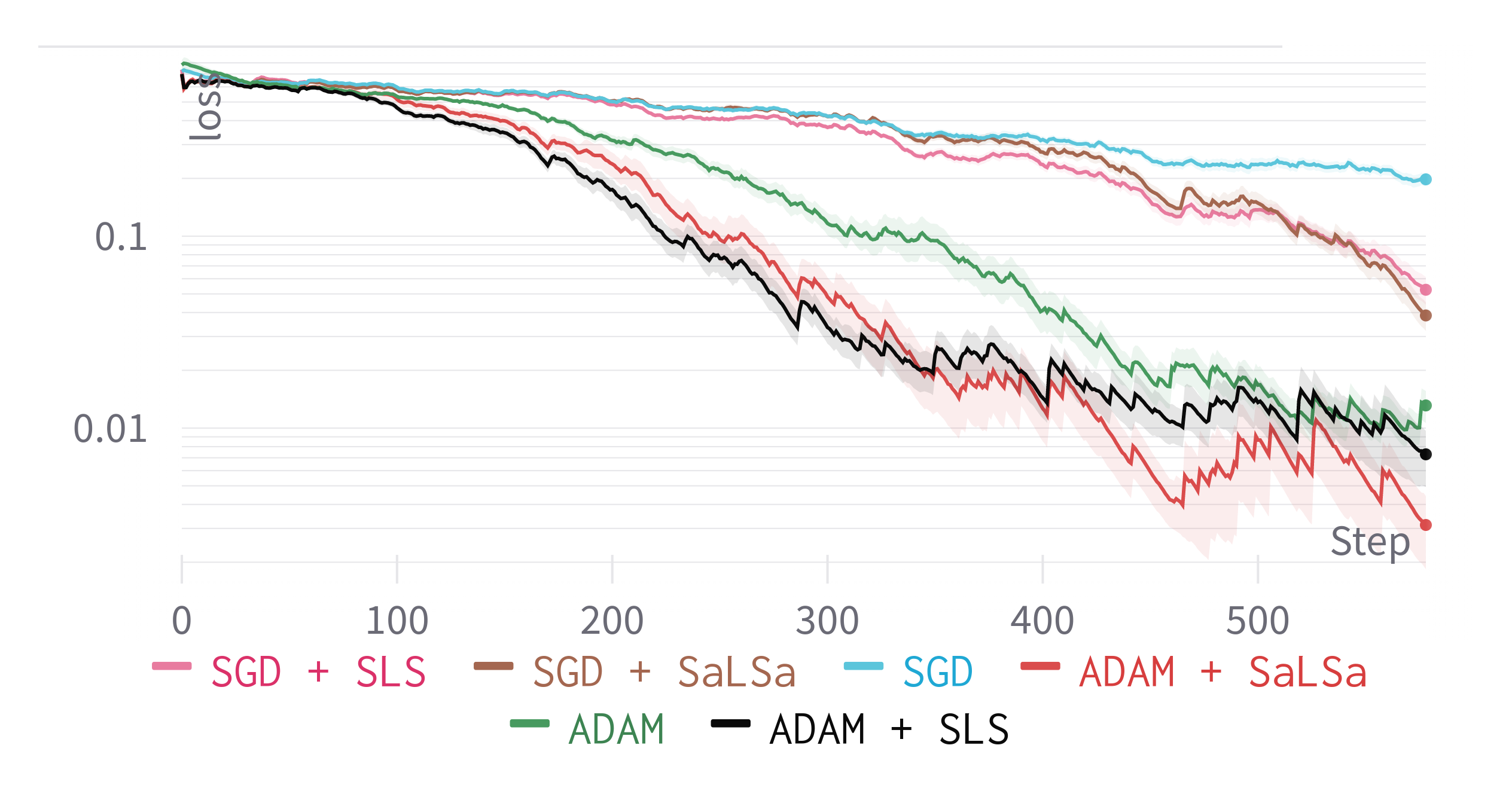}}
\subfloat[QNLI ]{\includegraphics[width = 0.33\textwidth]{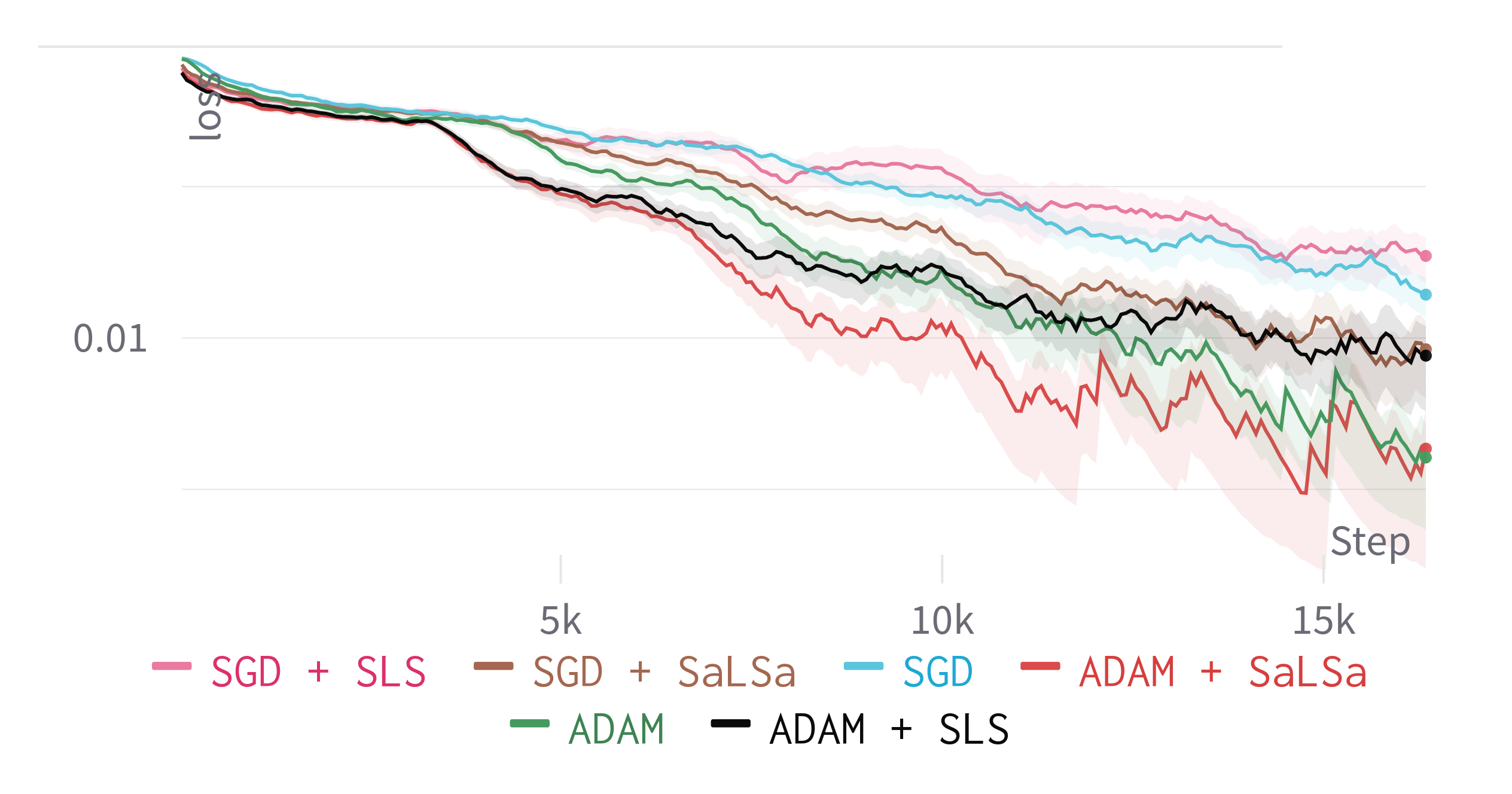}}
 \\
\subfloat[MNLI]{\includegraphics[width = 0.33\textwidth]{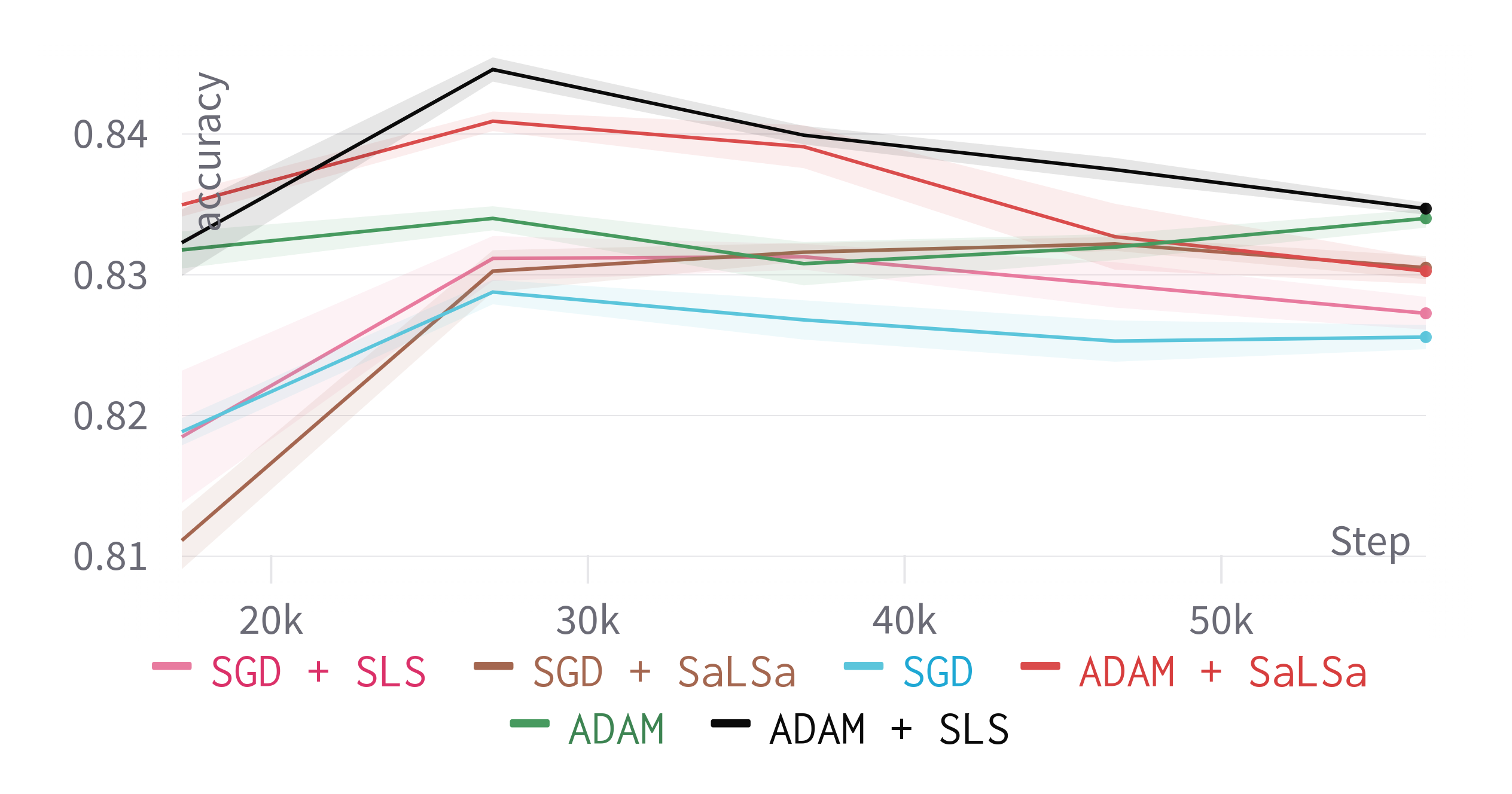}} 
\subfloat[MRPC]{\includegraphics[width = 0.33\textwidth]{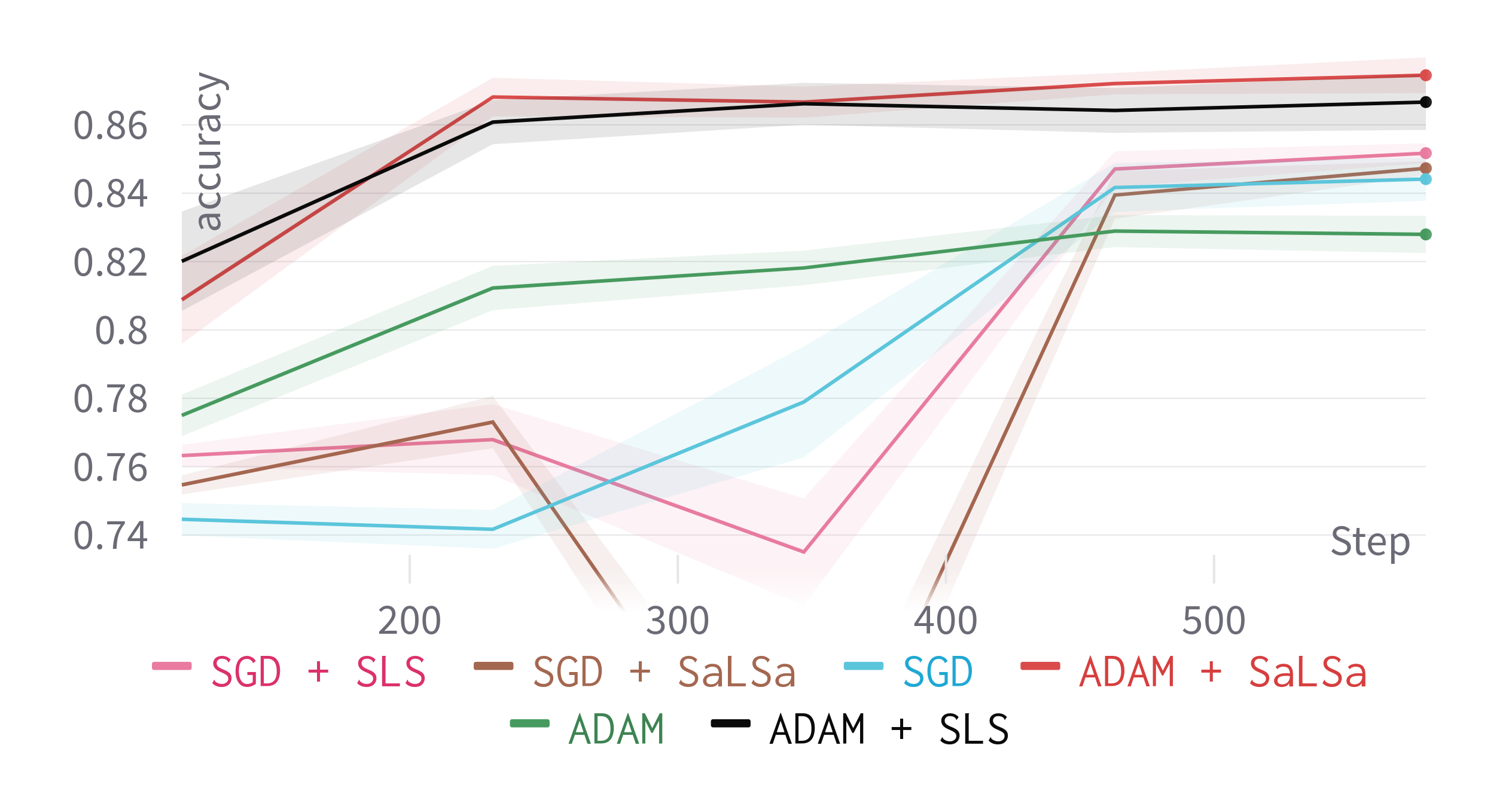}}
\subfloat[QNLI]{\includegraphics[width = 0.33\textwidth]{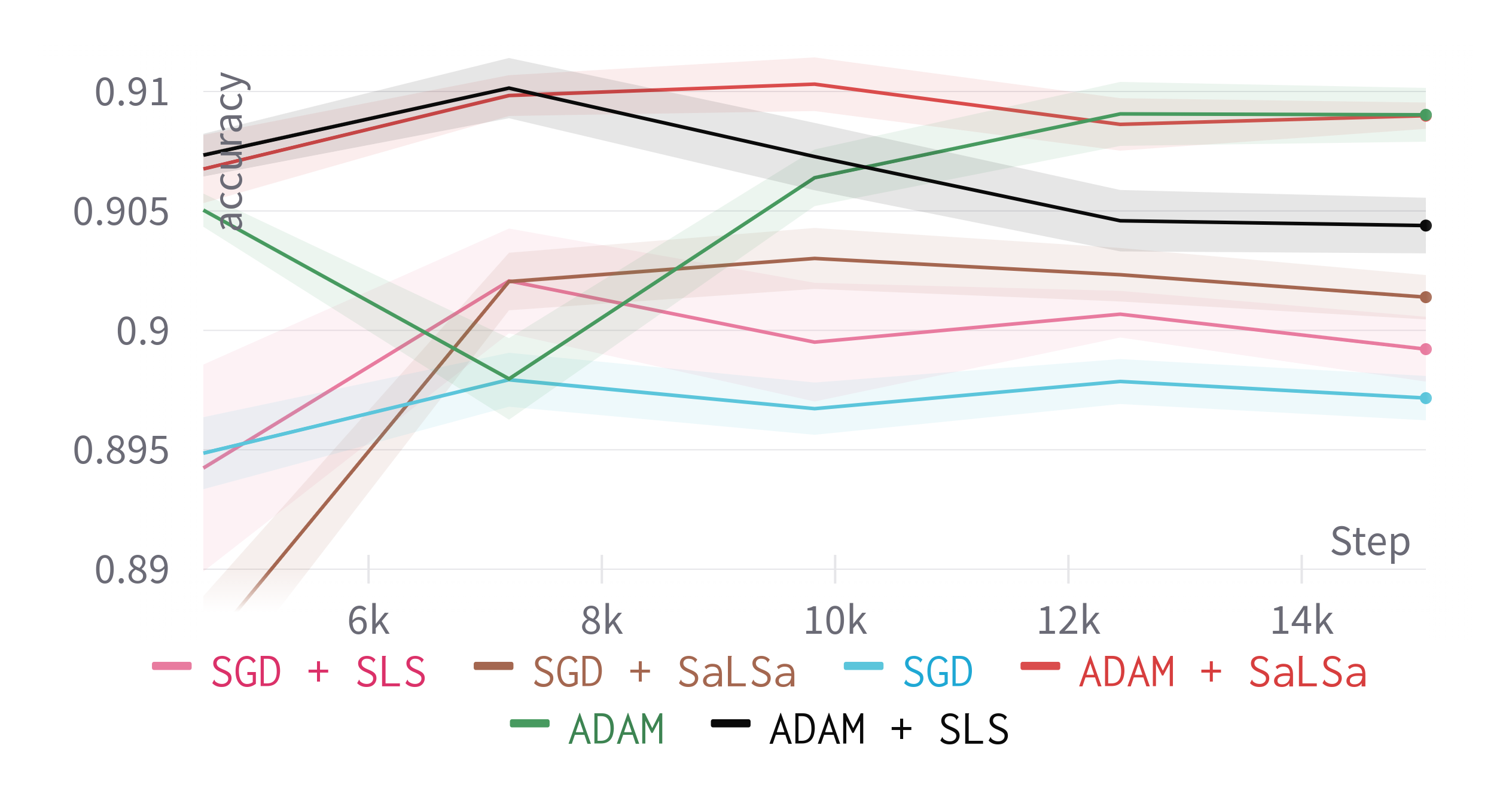}}

\caption{The top row depicts the loss curves, while the bottom row depicts the accuracy curves from experiments conducted on the GLUE dataset. Standard errors are represented around each curve with a shaded area. Accuracy measurements are calculated on the validation data, loss calculations were based on the training data. }
\label{fig:nlp}
\end{figure*}

\begin{figure*}
\vspace{-0.05\textwidth}
\subfloat[ImageNet]{\includegraphics[width = 0.33\textwidth]{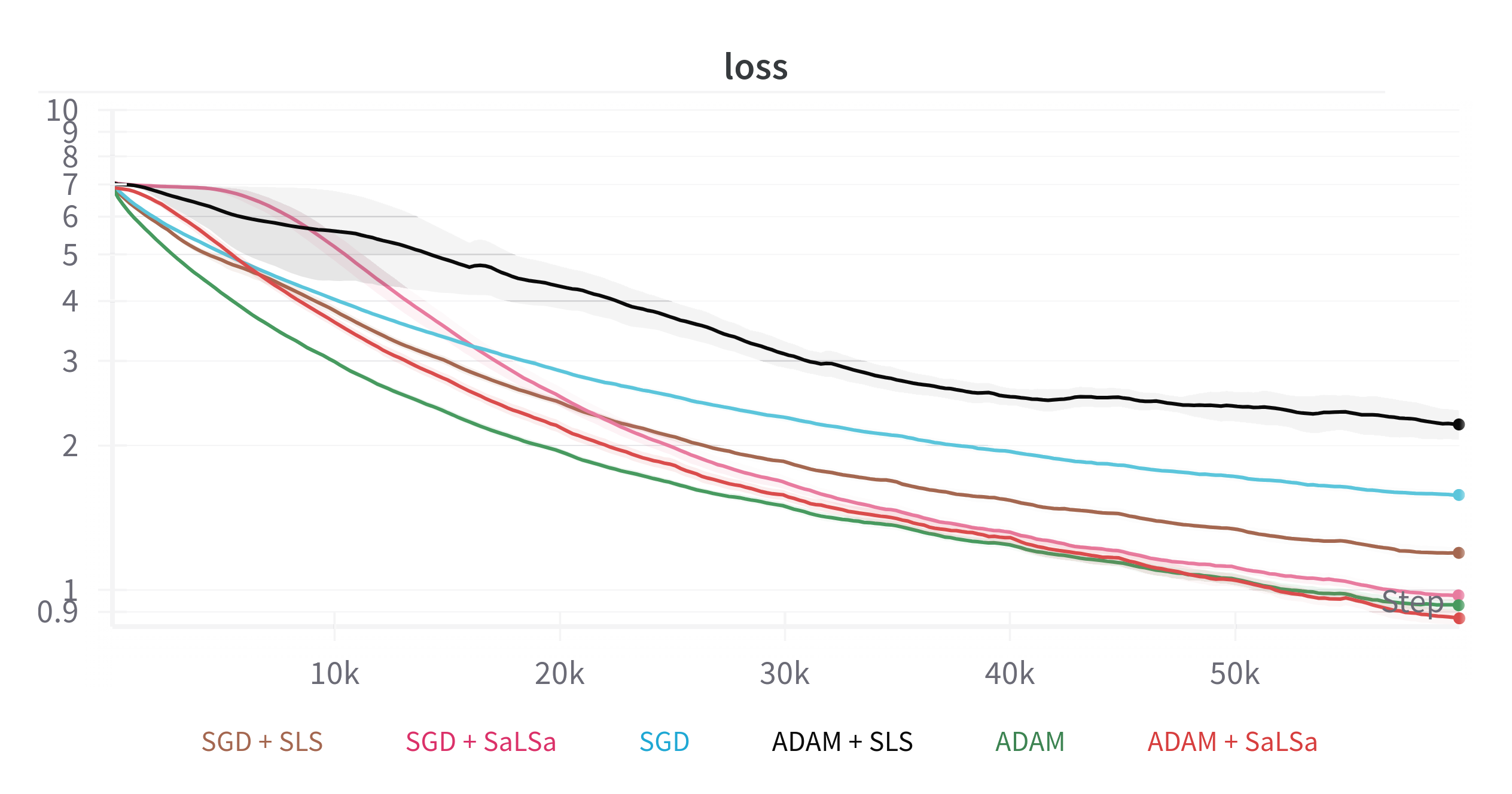}}
\subfloat[Cifar 100]{\includegraphics[width = 0.33\textwidth]{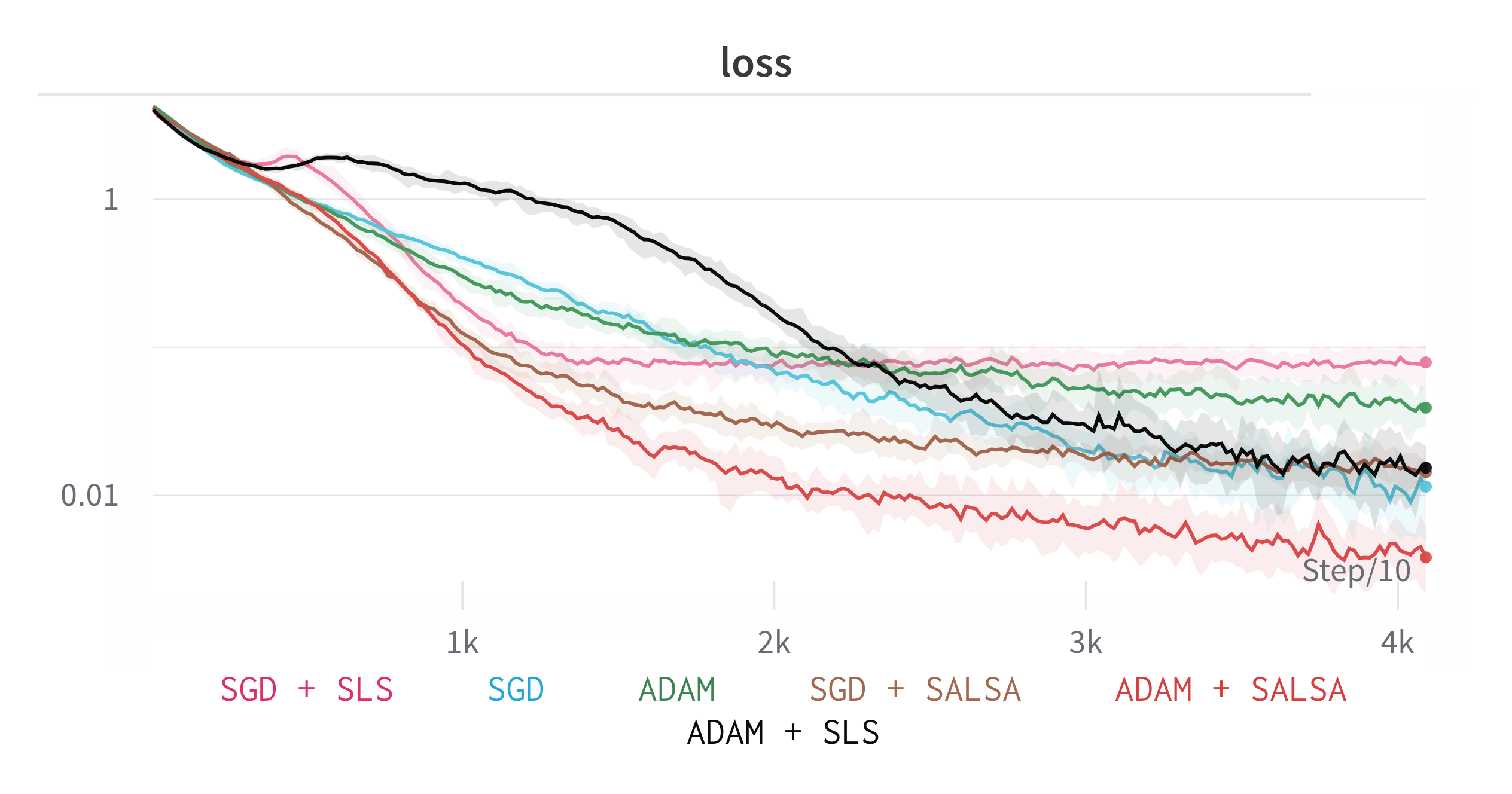}}
\subfloat[Cifar 10]{\includegraphics[width = 0.33\textwidth]{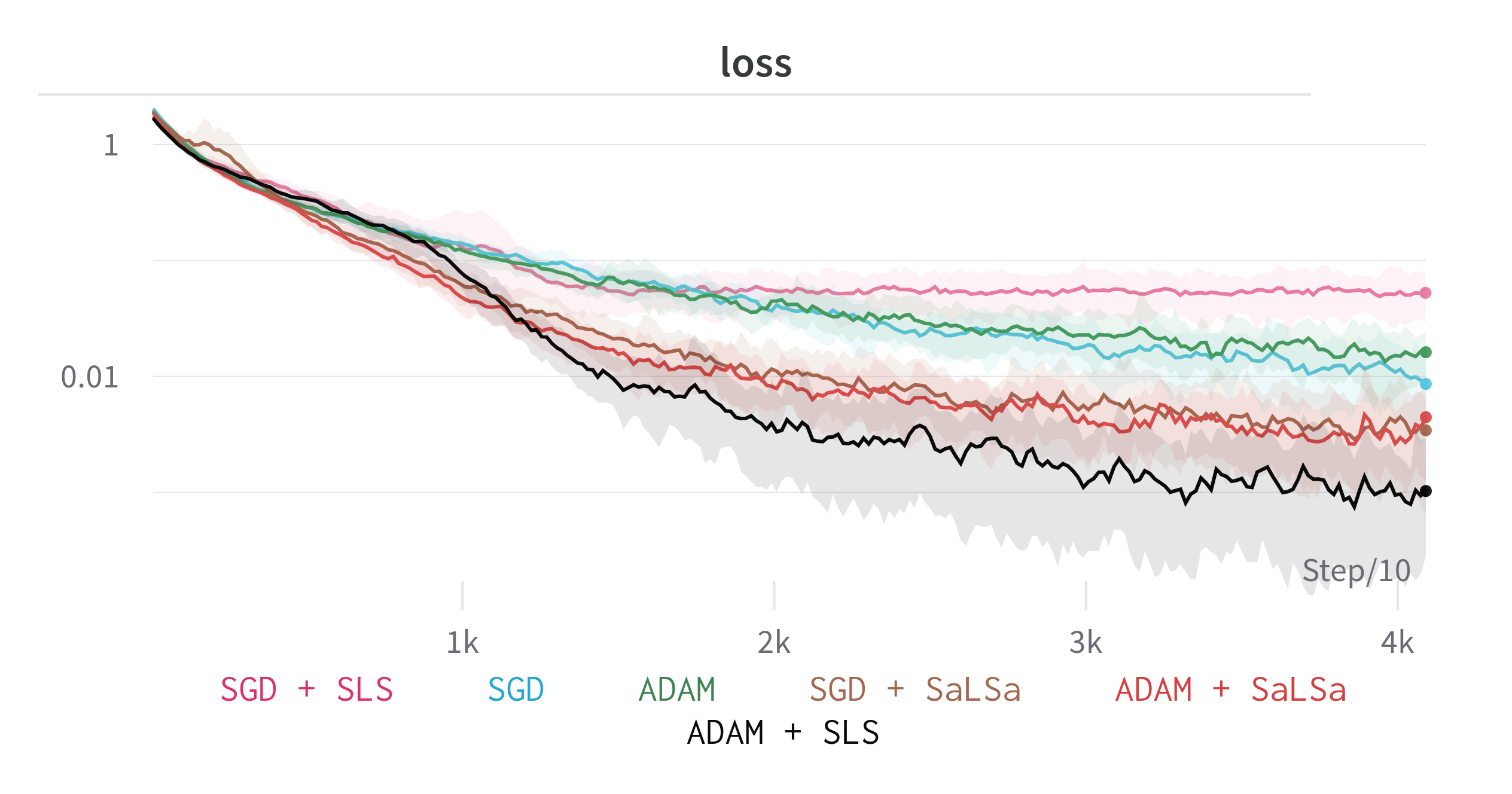}} \\
\subfloat[ImageNet]{\includegraphics[width = 0.33\textwidth]{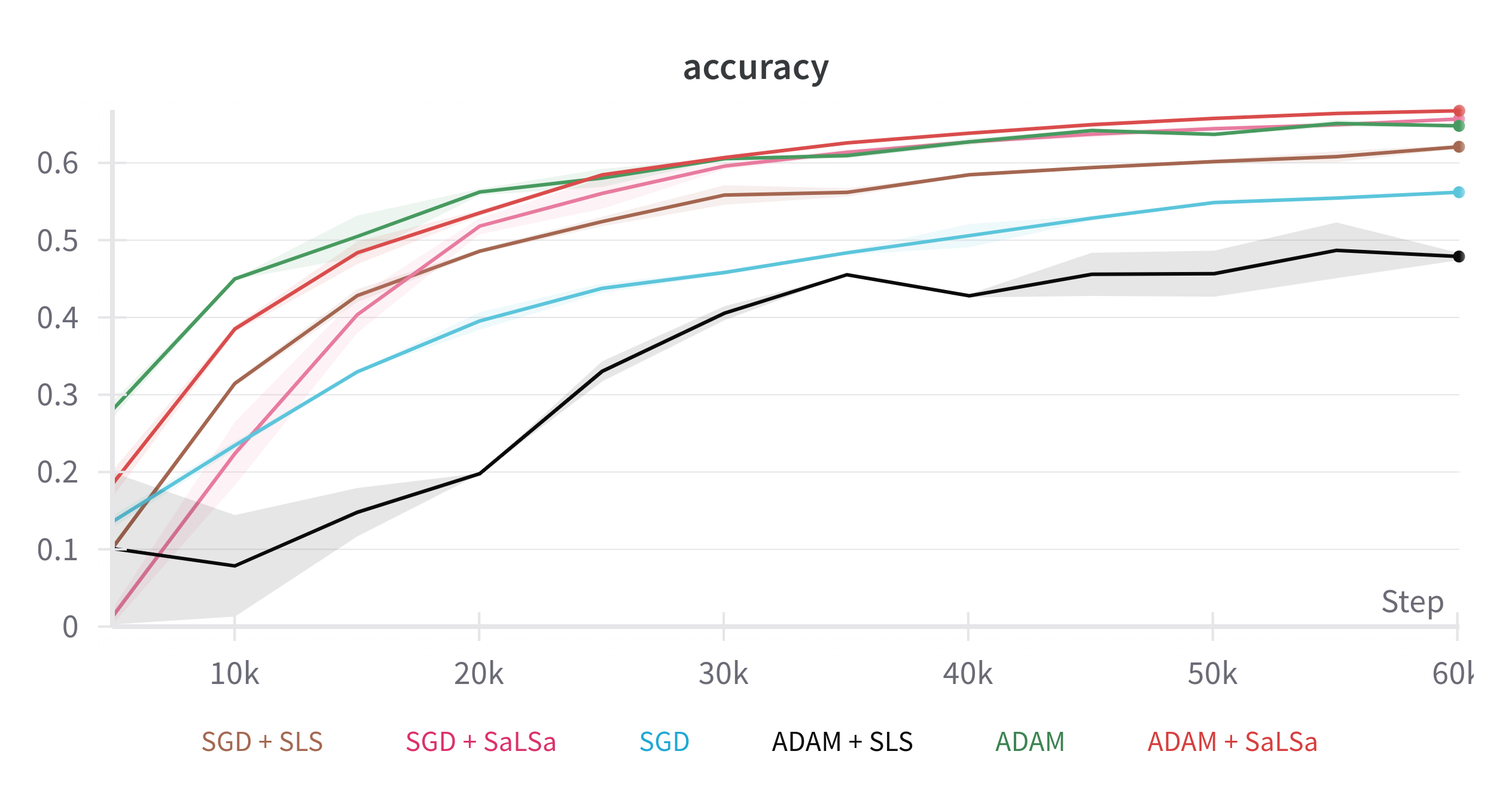}} 
\subfloat[Cifar 100]{\includegraphics[width = 0.33\textwidth]{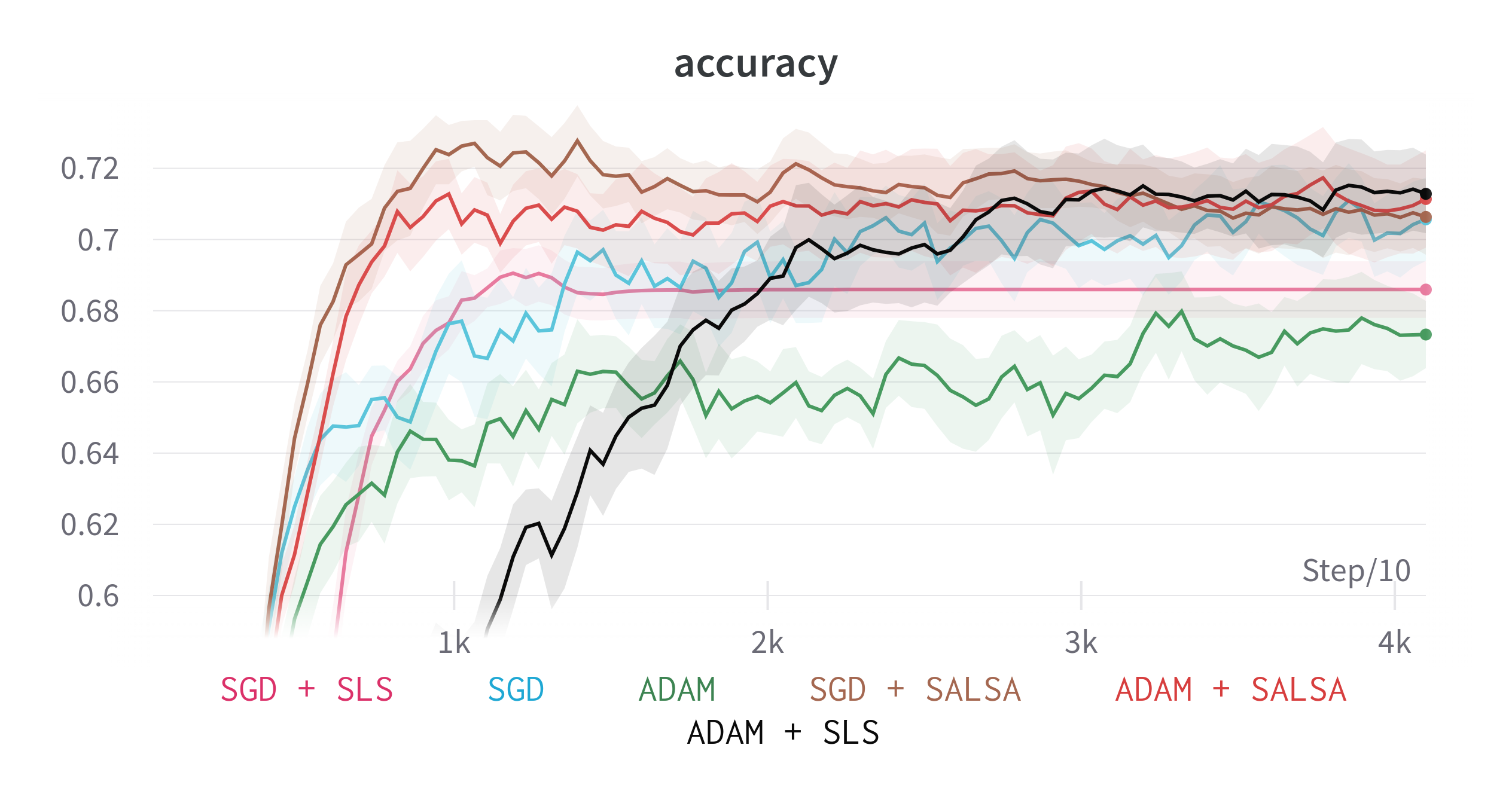}} 
\subfloat[Cifar 10]{\includegraphics[width = 0.33\textwidth]{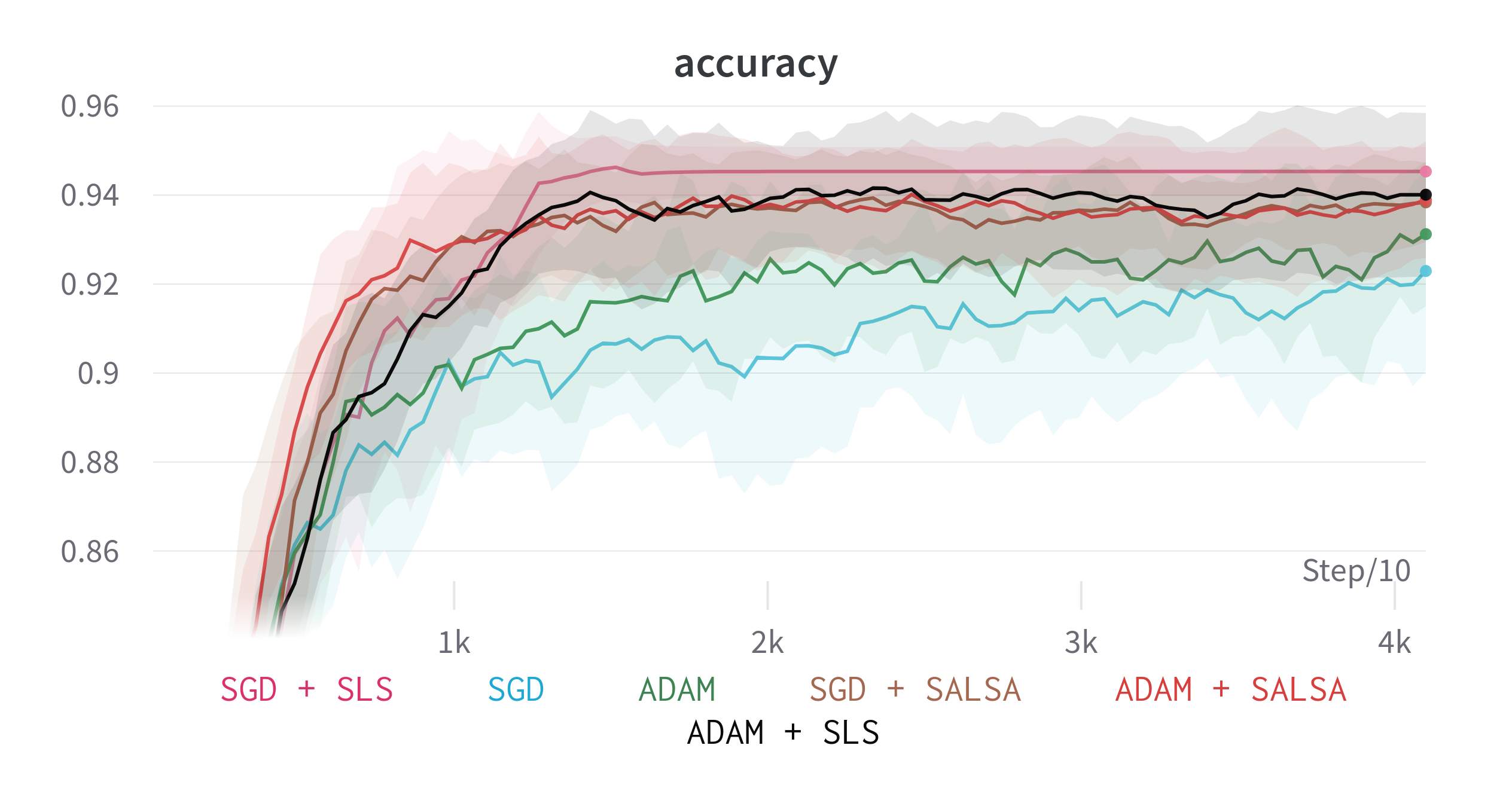}}

\caption{The top row depicts the loss curves, while the bottom row depicts the accuracy curves from experiments conducted on the image datasets. Standard errors are represented around each curve with a shaded area. Accuracy measurements are calculated on the validation data, loss calculations were based on the training data.}
\label{fig:image}
\vspace{-0.01\textwidth}
\end{figure*}

In our NLP experiments, as shown in Figure \ref{fig:nlp} and in the Appendix for GPT-2 and SST2, we have observed that, on average, ADAM + SaLSa achieves a lower final loss compared to ADAM, ADAM + SLS, and SGD + SLS. However, this improvement in loss does not always translate to a significant difference in the accuracy metric. ADAM + SLS and ADAM + SaLSa perform similarly in terms of accuracy, but both outperform ADAM and SGD + SLS on average.
Note that for similar final losses, the convergence rate of SaLSa is generally faster than that of ADAM, as depicted in Figure \ref{fig:nlp}.

\begin{table*}[t] 
  \centering
  \caption{Peak classification accuracies, averaged over 5 runs, for all datasets and optimization methods. Best performing optimization method is marked in \textbf{bold}. }
  \label{Fig:acc}
  \begin{tabular}{c cccccc}
    \toprule
 & ADAM & SGD & ADAM  & SGD  & ADAM  & SGD  \\
 & - & - & SLS & SLS &  SaLSa &  SaLSa \\
 \cmidrule(r){1-1}   \cmidrule(r){2-7} 
\it{MNLI} & 0.8340 & 0.8256 & \textbf{0.8446}  & 0.8303 & 0.8409 & 0.8305    \\
\it{QNLI} & \textbf{0.9090} & 0.8972 & 0.9044  & 0.8971 & \textbf{0.9090} & 0.9014   \\
\it{MRPC} & 0.8279 & 0.8441 & 0.8667  & 0.8441 & \textbf{0.8745} & 0.8473   \\
\it{SST2} & \textbf{0.9271} & 0.9225 & 0.9261   & 0.9167 & 0.9216  & 0.9245   \\
 \cmidrule(r){1-1}   \cmidrule(r){2-7} 

 ResNet34 \\
\it{CIFAR10} & 0.9312 & 0.9229 & 0.9401 & \textbf{0.9453}  & 0.9389 & 0.9384   \\
\it{CIFAR100} & 0.6733 & 0.7057 & \textbf{0.7128} & 0.6859 & 0.7114  & 0.7064   \\
 ResNet50 \\
\it{ImageNet} & 0.6486 & 0.5633 & 0.4836 & 0.6234 & \textbf{0.6684} & 0.6594   \\
 \cmidrule(r){1-1}   \cmidrule(r){2-7} 

average & 0.8215 & 0.8116 & 0.8111 & 0.8204 & \textbf{0.8378} & 0.8297    \\
average rank & 3.57 & 4.86 & 2.42 & 4.43 & \textbf{2.14} & 3.28    \\
    \bottomrule
  \end{tabular}
\end{table*}

\begin{table*}
  \centering
  \caption{Final losses, averaged over 5 runs, for all datasets and optimization methods. Best performing (minimal loss) optimization method is marked in \textbf{bold}. The logarithmic average is taken due to the logarithmic nature of the typical loss.}
  \label{Fig:loss}
  \begin{tabular}{c cccccc}
    \toprule
 & ADAM & SGD & ADAM  & SGD  & ADAM  & SGD  \\
 & - & - & SLS &  SLS &  SaLSa &  SaLSa \\
 \cmidrule(r){1-1}   \cmidrule(r){2-7} 
\it{MNLI} & 0.009567 & 0.08613 & 0.03713  & 0.06901 & \textbf{0.005867} &  0.02174  \\
\it{QNLI} & 0.00258 & 0.02079 & 0.00504  & 0.03667 & \textbf{0.000628} & 0.0091627\\
\it{MRPC} & 0.01312 & 0.1978 & 0.007298  & 0.05262 & \textbf{0.003126}  & 0.03862  \\
\it{SST2} & \textbf{0.005857} & 0.02561 & 0.009457   & 0.0412 & 0.006991 & 0.01837  \\
GPT-2 & 2.86 & 3.572 & 2.917  & 3.566 & \textbf{2.772} & 3.559  \\
 \cmidrule(r){1-1}   \cmidrule(r){2-7} 
ResNet34\\
\it{CIFAR10} & 0.01394 & 0.00982 & \textbf{0.0009508} & 0.05646 & 0.003314  & 0.003773  \\
\it{CIFAR100} & 0.03739 & 0.01143 & 0.01337  & 0.08245 & \textbf{0.003774} & 0.01453  \\
ResNet50\\
\it{ImageNet} & 0.9122 & 1.547 & 2.036 & 1.144 & \textbf{0.8339}  &  0.9788 \\
 \cmidrule(r){1-1}   \cmidrule(r){2-7} 

log average & 0.0355 & 0.0930 & 0.0315  & 0.134 & \textbf{0.0148} & 0.0477 \\
average rank & 2.75 & 4.625 & 3.125  & 5.5 & \textbf{1.25} & 3.75 \\
    \bottomrule
  \end{tabular}
\end{table*}

\subsection{Image - Convolutional Neural Networks Experiments}

In our image experiments, we have observed that the combination of ADAM + SLS or SGD + SLS yields good results for CIFAR10 and CIFAR100, but performs poorly for ImageNet, as depicted in Figure \ref{fig:image}. We attribute this outcome primarily to stability issues. Specifically, ADAM + SLS occasionally produces excessively large step sizes $\eta$, or it diminishes them to unreasonably small values $\eta \leq 10^{-10}$. On the other hand, our enhanced approaches ADAM + SaLSa and SGD + SaLSa, do not encounter these problems and on average deliver the best performance among all methods. 





\section{Related Work}
\label{sec:related_work}


The refinement of deep neural network optimization has been a focal point of investigation within the realm of machine learning. Various techniques and optimizers have been proposed, including but not limited to SGD \cite{robbins51a}, Adagrad \cite{adagrad}, RADAM \cite{DBLP:journals/corr/abs-1908-03265}, ADAMW \cite{adamw}, RMSprop \cite{RMSprop} and Adam \cite{adam}. However, selecting the most suitable optimizer remains a challenge, and there is no clear consensus on the best according to \cite{schmidt2021descending}.

In a recent study by\cite{schmidt2021descending} on the topic of optimization methods, it was observed that while there are various optimizers available, there is no definitive best optimizer. The authors highlight that the introduction of more optimizers does not necessarily lead to improved results, and therefore, alternative approaches should be explored to enhance optimization techniques. One such approach with great potential is automatic step size determination. One of the most common approaches for this are line search methods, which hold promise for enhancing optimization processes \cite{DBLP:journals/corr/abs-1805-08890, vaswani20a, paquette20a, galli2023dont}.


In this work we particularly build upon \cite{vaswani20a}. 
The Armijo line search method introduced there, offers several important advantages over other optimization techniques:
no hyperparameter tuning of the learning rate, faster convergence rates and better generalization.
A significant drawback of this method, along with other line search approaches, is that it requires at least an additional forward pass per update step. Consequently, this leads to an increase of approximately 30\% in computational resources required per training step. 

Recent work has shown that Transformers are highly sensitive to the choice of learning rate and learning rate schedule schedule during training \cite{DBLP:journals/corr/abs-1908-03265, idealme}. To address this issue, various approaches have been proposed, such as RADAM \cite{DBLP:journals/corr/abs-1908-03265} and warm starting. We have showed previously that our methods are able to train Transfomers despite these challenges.
Other related work includes \cite{batchnoise} which studies the correlation between batch size and learning rate, \cite{googleautodiff} which shows theoretical and practical results for training using higher order gradients and \cite{chen22a} which investigates why Adam, upon which we focus, is so effective at training the Transformer architecture.

The optimization of neural networks continues to be an important area of research, to which, the development of effective and reliable line search methods, which work on sensitive architectures such as transformers or large scale convolutional neural networks, constitutes a significant contribution.

\section{Conclusion}
\label{sec:conclusion}


We have introduced SaLSa, an automatic step size selection method and built a general purpose optimizer on top. We have compared its performance against tuned learning rates for larger datasets and architectures than previously done in optimizer evaluations for line search methods. The SaLSa optimizer performance compares favorably in these cases, while requiring no tuning of learning rates, or code overhead, as well as minimal compute overhead. We recommend its use as a first choice for training deep neural networks in these domains and publish the code as a Python package.

The source code is open-source and free (MIT licensed) software and available on \href{https://github.com/TheMody/No-learning-rates-needed-Introducing-SALSA-Stable-Armijo-Line-Search-Adaptation}{github}






\section*{Acknowledgment}

We gratefully
acknowledge funding and support by the Deutsche Forschungsgemeinschaft (DFG, German Research Foundation):
TRR 318/1 2021 – 438445824 and by SAIL. SAIL is funded by the Ministry of Culture and Science of the State of North Rhine-Westphalia under the grant no NW21-059A.



\bibliography{references}
\bibliographystyle{IEEEtran}
%

\end{document}